\documentclass[12pt, a4paper]{article}

\usepackage[a4paper, margin=2.5cm]{geometry}
\usepackage{amsmath, amssymb, amsthm}
\usepackage{booktabs}
\usepackage{graphicx}
\usepackage{xcolor}
\usepackage{hyperref}
\usepackage{microtype}
\usepackage{parskip}
\usepackage{setspace}
\usepackage{enumitem}
\usepackage{natbib}
\usepackage{mathtools}

\hypersetup{colorlinks=true, linkcolor=blue!60!black,
            citecolor=blue!60!black, urlcolor=blue!60!black}

\newtheorem{definition}{Definition}
\newtheorem{proposition}{Proposition}
\newtheorem{remark}{Remark}

\newcommand{\smece}{\textsc{smece}}
\newcommand{\ece}{\textsc{ece}}
\newcommand{\Sb}{S_b}
\newcommand{\phatb}{\hat{p}_b}
\newcommand{\ybarb}{\bar{y}_b}
\newcommand{\yhardb}{\hat{y}^{\mathrm{hard}}_b}
\DeclareMathOperator{\sigmoid}{\sigma}
\newcommand{\given}{\mid}

\begin{document}

\begin{center}
  {\LARGE\bfseries Soft Mean Expected Calibration Error (SMECE):\\[4pt]
   A Calibration Metric for Probabilistic Labels}\\[18pt]
  {\large Michael Leznik}\\[6pt]
  {\normalsize March 2026}
\end{center}

\begin{abstract}
The Expected Calibration Error (\ece{}), the dominant calibration metric in
machine learning, compares predicted probabilities against empirical frequencies
of binary outcomes. This is appropriate when labels are binary events. However,
many modern settings produce labels that are themselves probabilities rather
than binary outcomes: a radiologist's stated confidence, a teacher model's soft
output in knowledge distillation, a class posterior derived from a generative
model, or an annotator agreement fraction. In these settings, \ece{} commits
a category error --- it discards the probabilistic information in the label by
forcing it into a binary comparison. The result is not a noisy approximation
that more data will correct. It is a structural misalignment that persists and
converges to the wrong answer with increasing precision as sample size grows.

We introduce the \textbf{Soft Mean Expected Calibration Error (\smece{})},
a calibration metric for settings where labels are of probabilistic nature.
The modification to the \ece{} formula is one line: replace the empirical
hard-label fraction in each prediction bin with the mean probability label of
the samples in that bin. \smece{} reduces exactly to \ece{} when labels are
binary, making it a strict generalisation.

The theoretical grounding draws on the Bayesian framework, which provides the
most principled account of why probabilistic labels arise and why evaluating
them against binary realisations is wrong. Under a Gaussian class-conditional
generative model with equal priors, the posterior $P(Y=1 \given x)$ is
analytically $\sigmoid(kx)$ --- derived, not declared. The binary outcome is
one realisation drawn from this posterior; the posterior is the complete answer.
However, the case for \smece{} is not a case for Bayesian methodology. It
holds equally for frequentist mixture models, knowledge distillation, and any
framework that produces continuous probability labels. The unifying condition
is simpler: whenever the calibration target is itself a probability, \smece{}
is the correct metric and \ece{} is not.

Four experiments demonstrate the consequences. A model that correctly learns
the posterior achieves \smece{} = 0.0000 exactly at every sample size, while
its \ece{} converges to 0.1151 --- a stable non-zero quantity that grows more
precisely wrong as $n$ increases. At low signal-to-noise, \ece{}'s ranking
accuracy falls to 40\%, below chance, while \smece{} achieves 90\%.

\medskip
\noindent\textbf{Keywords:} calibration, probabilistic labels, Expected
Calibration Error, soft labels, Bayesian posterior, knowledge distillation,
uncertainty quantification.
\end{abstract}

\newpage

\section{Introduction}
\label{sec:intro}

\subsection{A Motivating Example}

Consider a radiologist examining a scan. Based on the image, prior clinical
knowledge, and available patient history, she states: ``I am 70\% confident
this is malignant.'' This probability is not a frequency. There is no
sequence of identical scans being run repeatedly. It is the radiologist's
rational summary of her uncertainty given the evidence available at that
moment --- a degree of belief.

Now consider a model trained to assist with this task. The model takes the
scan as input $x$ and outputs a predicted probability $\hat{p}(x)$. The
calibration question is: does the model correctly represent the uncertainty
that a rational agent should have given this scan? In our mapping:

\begin{itemize}
  \item $x$ is the scan --- the observed evidence.
  \item $P(Y=1 \given x)$ is the true posterior --- what a rational agent
    \textit{should} believe given this scan, derived from all prior knowledge
    about how scans relate to diagnoses.
  \item $\hat{p}(x)$ is the model's prediction --- what the model outputs
    given the scan.
  \item $y \in \{0,1\}$ is the binary outcome --- what the biopsy
    eventually reveals.
\end{itemize}

A model that outputs $\hat{p}(x) = 0.70$ for a scan whose true posterior is
$P(Y=1 \given x) = 0.70$ is perfectly calibrated. It correctly represents
the uncertainty given the available evidence. The biopsy may return 0 or 1,
but that does not make 0.70 wrong: it was the right answer given what was
knowable at prediction time. Of course the binary outcome matters enormously
--- it determines the patient's treatment. But it answers a different question:
not ``was the model's uncertainty correct?'' but ``what did the biopsy find?''

\ece{} conflates these two questions. It evaluates whether $\hat{p}(x)$
matches the biopsy result $y$, not whether it matches the posterior
$P(Y=1 \given x)$. A model that outputs 0.70 will be penalised whenever
the biopsy returns 0, even though 0.70 was the correct degree of belief given
the scan. A model that overconfidently outputs 0.95 will be penalised less,
because overconfidence aligns better with the binary outcome. \smece{} asks
the right question: does $\hat{p}(x)$ match $P(Y=1 \given x)$?

\subsection{Binary Labels and Probabilistic Labels}

Machine learning labels come in two fundamentally different forms.

A \textit{binary label} $y \in \{0, 1\}$ records an observed outcome:
the patient has cancer or does not; the transaction is fraudulent or is not.
Binary labels are the native input of \ece{}: the metric was designed to
evaluate whether predicted probabilities match the frequency of such outcomes.
Under the frequentist interpretation of probability --- where $P(Y=1) = 0.7$
means that 70\% of trials in a long-run reference class are positive ---
binary labels and \ece{} are perfectly aligned.

A \textit{probabilistic label} $y \in [0,1]$ is itself a probability. It does
not record a binary event; it records a degree of certainty about an event.
The radiologist's 70\% confidence, the teacher model's soft output of 0.73,
the posterior $P(Y=1 \given x)$ derived from a generative model --- all are
probabilistic labels. Under the Bayesian interpretation of probability,
associated with Thomas Bayes and formalised by Bruno de Finetti, such labels
are degrees of belief: rational summaries of what is known given the available
evidence. There is no binary truth underneath them waiting to be revealed.
The label \textit{is} the answer.

\ece{} was not designed for probabilistic labels. Applying it to them forces
a category error: the metric treats the probabilistic label as if it were a
noisy binary outcome, discards its distributional content, and evaluates the
model against a binary approximation of the label that was never the ground truth.

\subsection{ECE is a Binary Metric}

The Expected Calibration Error \citep{naeini2015, guo2017} asks: over all
inputs to which the model assigns probability $\hat{p} \approx p$, does the
fraction of positive outcomes equal $p$? Formally, it computes the weighted
mean absolute difference between the mean prediction $\phatb$ and the
empirical positive fraction $\yhardb$ within equal-width bins on $[0,1]$:
\begin{equation}
  \ece{} = \sum_{b=1}^{B} \frac{|\Sb|}{n}
    \bigl|\, \phatb - \yhardb \,\bigr|.
  \label{eq:ece}
\end{equation}
\ece{} is a binary metric. Its calibration target $\yhardb$ is the
empirical fraction of binary outcomes within each prediction bin --- a
quantity that is only defined when labels are binary. At its core, \ece{}
compares:
\begin{center}
  \textit{probabilistic prediction} $\longleftrightarrow$
  \textit{binary outcome}
\end{center}
This comparison is well-posed when the label is binary. It is a category
error when the label is itself a probability, because the metric discards
the distributional content of the label and replaces it with a single
binary value.

The reason \ece{} is binary is that it is frequentist in nature: it
interprets calibration as agreement between predicted probabilities and
long-run outcome frequencies, which are inherently binary. This is the
right criterion when labels are binary outcomes. When labels are
probabilistic --- when the label is a degree of belief, a posterior,
or a soft target --- \ece{} is evaluating the model against a
binary approximation of the label, not against the label itself.
A single binary outcome cannot validate or invalidate a probabilistic
label, for the same reason that a single coin flip cannot validate or
invalidate a 70\% probability statement.

\subsection{The Core Problem}

The conflict between \ece{} and probabilistic labels can be stated precisely.
Let $p^*(x) \in [0,1]$ be the true probabilistic label for input $x$. A model
that correctly learns the label outputs $\hat{p}(x) = p^*(x)$ for all $x$.
This model is perfectly calibrated in the probabilistic label sense: its
predicted probability equals the true probability at every input.

Now evaluate this model with \ece{} against binary outcomes $y \sim
\mathrm{Bernoulli}(p^*(x))$. In each prediction bin $b$, the mean prediction
is $\phatb = \bar{p}^*(x)_b$ --- the mean true posterior in that bin. The
empirical fraction $\yhardb$ is an average of Bernoulli realisations, which
converges to the same mean posterior as $n \to \infty$. So at infinite $n$,
\ece{} $\to 0$ for the perfectly calibrated model --- but only in the limit.
At finite $n$, Bernoulli noise drives \ece{} above zero even for the perfect
model. More importantly, when the posterior $p^*(x)$ is not near 0 or 1 ---
when there is genuine uncertainty --- the Bernoulli realisations are noisy
enough that \ece{} systematically misranks models.

The situation is worse than finite-sample noise. Consider a discrete
threshold: $y_{\mathrm{hard}} = \mathbf{1}[x \geq 0]$. This is what
\ece{} uses if hard labels are obtained by thresholding. Now the calibration
target is a step function, and the perfect posterior model
$\hat{p}(x) = \sigmoid(kx)$ will \textit{never} achieve \ece{} $= 0$ at
any sample size. Experiment~4 demonstrates this: ECE(A) $= 0.1151$ at
$n = 500$, $n = 1000$, $n = 5000$, and $n = 10\,000$, with declining
variance throughout. \ece{} converges to the wrong answer with increasing
precision.

\subsection{SMECE as a Calibration Metric for Probabilistic Labels}

We introduce the Soft Mean Expected Calibration Error (\smece{}), which
evaluates model predictions against the posterior probability rather than
against binary realisations. The modification to equation~\eqref{eq:ece}
is one line: replace $\yhardb$ with the mean posterior probability
$\ybarb$ of the samples in bin $b$. A model that correctly learns the
posterior achieves \smece{} $= 0$ exactly and identically --- not as a
limit but as an algebraic fact.

\smece{} is not merely a technical adjustment. It reflects a different
epistemological commitment. \ece{} asks: do your probabilities match
outcome frequencies? \smece{} asks: do your probabilities match the
probabilistic label? These are different questions. Whenever the label is
itself a probability, only the second question is appropriate --- regardless
of whether the label arose from a Bayesian posterior, a frequentist mixture,
or a teacher network.

\subsection{Relationship to Existing Work}

The problem of evaluating calibration when labels are not binary has been
noted in recent literature, primarily from the direction of annotator
disagreement. \citet{baan2022} argue that \ece{} is invalid when humans
inherently disagree on labelling and propose distribution-level calibration
metrics (DistCE, EntCE) for multiclass NLP settings. \citet{khurana2024}
extend this to selective prediction under crowd disagreement. These
contributions share our diagnosis --- that binary outcomes are the wrong
calibration target in some settings --- but motivate soft labels as
aggregations of annotator votes rather than as Bayesian posteriors, and
operate in the multiclass NLP setting with full per-instance label
distributions.

Our contribution is distinct in three ways. First, our central claim is
that \ece{} is the wrong metric whenever labels are probabilistic, not
merely when annotators disagree; the Bayesian framework provides the most
principled grounding for this claim, with the posterior derived from a
generative model rather than aggregated from votes. Second, we operate
in the binary classification setting and require only a single soft label
per instance, making \smece{} applicable in any setting that produces
continuous probability labels. Third, we identify the unifying condition
--- labels of probabilistic nature --- that subsumes Bayesian posteriors,
frequentist mixtures, distillation, and expert elicitation under a single
metric, a generalisation absent from the annotator disagreement literature.

\subsection{Contributions and Outline}

This paper makes three contributions. First, we identify probabilistic
labels as the condition under which \ece{} is structurally wrong, and
establish this analytically using the Bayesian framework as the most
principled grounding. Second, we define \smece{} as the calibration
metric for probabilistic labels and show it is a strict generalisation
of \ece{}, reducing to it exactly when labels are binary. Third, we
present four controlled experiments using a generative model whose posterior
is analytically $\sigmoid(kx)$, demonstrating that \smece{} correctly
identifies the probability-matching model at every sample size while
\ece{} systematically fails.

Section~\ref{sec:background} reviews \ece{} and related work.
Section~\ref{sec:epistemology} analyses why \ece{} fails structurally
for probabilistic labels. Section~\ref{sec:smece} defines \smece{}. Section~\ref{sec:simulation}
describes the generative model. Sections~\ref{sec:exp1}--\ref{sec:exp4}
present the experiments. Section~\ref{sec:discussion} discusses implications.
Section~\ref{sec:conclusion} concludes.

\section{Background}
\label{sec:background}

\subsection{Expected Calibration Error}

\ece{} partitions the unit interval into $B$ equal-width prediction bins,
computes the mean prediction $\phatb$ and empirical positive fraction
$\yhardb$ within each bin, and returns the sample-weighted mean absolute
difference. Known limitations include sensitivity to $B$, noise in sparse
bins, and concentration effects under unequal class balance \citep{kumar2019}.
Adaptive equal-mass binning, isotonic regression, and kernel-based calibration
estimators have been proposed as alternatives \citep{zhang2020, gruber2022}.
All of these are frequentist: they evaluate against binary outcomes.

\subsection{Calibration Beyond Binary Outcomes}

Temperature scaling \citep{guo2017} and Platt scaling learn a post-hoc
transformation of model logits to improve \ece{}. They are evaluated
against binary outcomes and are frequentist methods.

Proper scoring rules \citep{gneiting2007, degroot1983} provide a
decision-theoretic framework. The Brier score
$\mathrm{BS} = n^{-1}\sum_i(\hat{p}_i - y_i)^2$ extends naturally to
$y_i \in [0,1]$ and is minimised at $\hat{p}_i = y_i$. The Brier score
is a proper scoring rule and is therefore compatible with the Bayesian
setting when $y_i$ is the posterior probability. However, it provides
a single aggregate and does not offer the bin-level diagnostic detail
of the reliability diagram. \smece{} provides this detail.

\subsection{Soft Labels and Annotator Disagreement}

\citet{baan2022} demonstrate that \ece{} is theoretically invalid under
inherent human disagreement and propose DistCE, EntCE, and RankCS as
alternatives for multiclass NLP. \citet{khurana2024} use crowd disagreement
fractions for selective prediction calibration. \citet{dawid1979} provide the foundational multi-annotator model;
\citet{uma2021} survey the
broader literature on learning from disagreement. These papers motivate
soft labels via annotator vote aggregation rather than Bayesian posteriors,
require full per-instance label distributions, and address the multiclass
NLP setting. \smece{} addresses the binary setting, requires only a scalar
soft label per instance, and is motivated by a distinct epistemological
argument.

\subsection{Knowledge Distillation and Soft Targets}

\citet{hinton2015} train students on teacher softmax outputs. The student's
calibration against these soft targets is a probabilistic label calibration
question: does the student correctly represent the teacher's output probability?
Evaluating the student with \ece{} against hard test labels asks a different
and less relevant question. \smece{} provides the appropriate metric.

\section{Why ECE Fails for Probabilistic Labels}
\label{sec:epistemology}

\subsection{What ECE Measures}

\ece{} is consistent for the quantity
\begin{equation}
  \mathbb{E}\Bigl[\bigl|\,
    \hat{p}(X) - P(Y=1 \given \hat{p}(X))
  \,\bigr|\Bigr],
  \label{eq:ece_population}
\end{equation}
the expected absolute difference between the model's output and the true
conditional probability of a positive outcome, averaged over the data
distribution \citep{guo2017, kumar2019}. This is a well-defined calibration
target for binary labels. It is the right quantity to estimate when labels
are binary outcomes and the goal is to assess whether the model's stated
probabilities match their empirical frequency. It is a frequentist criterion
by construction: it measures agreement with observed binary frequencies.

\subsection{Why Probabilistic Labels are a Different Target}

Let $p^*(x) \in [0,1]$ be a probabilistic label --- the correct probability
assigned to input $x$, whether it arises from a Bayesian posterior, a
frequentist mixture, or any other process. A perfectly calibrated model
outputs $\hat{p}(x) = p^*(x)$ for all $x$. The \ece{} of this model
against binary outcomes $y_i \in \{0,1\}$ is:
\begin{equation}
  \mathbb{E}\bigl[\,|\hat{p}(X) - P(Y=1 \given \hat{p}(X))|\,\bigr]
  = \mathbb{E}\bigl[\,|p^*(X) - P(Y=1 \given p^*(X))|\,\bigr].
  \label{eq:freq_of_bayes}
\end{equation}
This equals zero only if the probabilistic label exactly matches the binary
outcome frequency at every value of $p^*$ --- that is, only if the label
and the outcome are the same object. When the label is a probability and
the outcome is a binary realisation of it, they are not. The gap between
them is structural: it is the information discarded when a probability is
collapsed to a binary value. This gap does not shrink with sample size;
more data estimates it with increasing precision but does not close it.

\subsection{The Binary Outcome as a Realisation}

In the Bayesian generative model, the binary outcome $y \sim
\mathrm{Bernoulli}(p^*(x))$ is a realisation drawn from the posterior.
The posterior itself, $p^*(x)$, is the complete statement of what is known.
Using $y$ as the calibration target discards the distributional information
in $p^*(x)$: it replaces a probability with one draw from that probability.

To make this concrete: if $p^*(x) = 0.51$ for some input $x$, then $y$ will
be 0 or 1, each with approximately equal probability. A model that outputs
$0.51$ is perfectly calibrated in the probabilistic label sense. \ece{} will penalise
it by approximately $|0.51 - y|$, which is close to $0.51$ on average ---
a large calibration error for what is in fact a perfectly calibrated model.
A model that outputs $0.99$ will be penalised only when $y = 0$, which
happens with probability $0.49$. In expectation, \ece{} prefers the
overconfident model.

This is the failure mode. It is not finite-sample noise. It is a systematic
inversion of the calibration ranking caused by evaluating against a binary
realisation rather than the posterior.

\section{Soft Mean Expected Calibration Error}
\label{sec:smece}

\subsection{Definition}

Let $\{(\hat{p}_i, p^*_i)\}_{i=1}^n$ be predicted probabilities and
target posterior probabilities with $\hat{p}_i, p^*_i \in [0,1]$.
Partition $[0,1]$ into $B$ equal-width prediction bins and let
$\Sb = \{i : \hat{p}_i \in [(b-1)/B,\, b/B)\}$. Define:
\begin{equation}
  \phatb = \frac{1}{|\Sb|}\sum_{i\in\Sb}\hat{p}_i,
  \qquad
  \ybarb = \frac{1}{|\Sb|}\sum_{i\in\Sb} p^*_i.
  \label{eq:bin_means}
\end{equation}

\begin{definition}[\smece{}]
\begin{equation}
  \smece{} = \sum_{b=1}^{B} \frac{|\Sb|}{n}
    \bigl|\, \phatb - \ybarb \,\bigr|.
  \label{eq:smece}
\end{equation}
\end{definition}

\smece{} replaces the empirical hard-label fraction $\yhardb$ in \ece{}
with the mean target posterior $\ybarb$. All other structure is identical.

\subsection{Properties}

\begin{proposition}[Exact reduction to \ece{}]
When $p^*_i \in \{0,1\}$ for all $i$, $\ybarb = \yhardb$ identically
and $\smece{} = \ece{}$.
\end{proposition}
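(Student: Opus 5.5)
The plan is to make explicit the definition of $\yhardb$, which the paper gives only in words as ``the empirical positive fraction'' in bin $b$, and then show that the two bin-level quantities coincide term by term. I will write the hard-label fraction as
\[
  \yhardb = \frac{1}{|\Sb|}\,\bigl|\{\, i \in \Sb : y_i = 1 \,\}\bigr|
          = \frac{1}{|\Sb|}\sum_{i\in\Sb} \mathbf{1}[y_i = 1].
\]
Here the binary label $y_i$ used by \ece{} is identified with $p^*_i$ when $p^*_i\in\{0,1\}$. The bins $\Sb$ are the same in both metrics, since both are defined by $\hat{p}_i \in [(b-1)/B,\, b/B)$ and depend only on the predictions. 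Hence the bin weights $|\Sb|/n$ and the bin means $\phatb$ agree as well.

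The key step is the elementary observation that for $t \in \{0,1\}$ we have $t = \mathbf{1}[t=1]$. Applying this to each $p^*_i$ with $i\in\Sb$ gives
\[
  \ybarb = \frac{1}{|\Sb|}\sum_{i\in\Sb} p^*_i
         = \frac{1}{|\Sb|}\sum_{i\in\Sb}\mathbf{1}[p^*_i=1] = \yhardb
\]
for every nonempty bin. Substituting into equation~\eqref{eq:smece}, the sum agrees term by term with equation~\eqref{eq:ece}, so $\smece{}=\ece{}$.

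The only point requiring care, which I regard as the main (minor) obstacle, is the treatment of empty bins, where $\phatb$, $\ybarb$ and $\yhardb$ are undefined ($0/0$). I will adopt the same convention in both metrics: such bins carry weight $|\Sb|/n = 0$ and are omitted from the sum. With that convention the identity holds exactly and for every realisation, not merely in expectation. A secondary bookkeeping point is the right endpoint $\hat{p}_i = 1$, which falls outside $[(B-1)/B,1)$. I will note that any convention assigning it to bin $B$ applies identically to both metrics and so does not affect the equality.
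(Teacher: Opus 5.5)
Your proof is correct and follows the paper's own reasoning, which treats the proposition as immediate from the definitions (cf.\ Step~3 of the proof in Appendix~\ref{app:proof}): the bins are identical in both metrics, and $p^*_i = \mathbf{1}[p^*_i = 1]$ for binary labels, so $\ybarb = \yhardb$ bin by bin. Your treatment of empty bins and of the endpoint $\hat{p}_i = 1$ adds care the paper omits, and identifying $y_i$ with $p^*_i$ is consistent with both of the paper's outcome models: a $\mathrm{Bernoulli}(p^*_i)$ draw equals $p^*_i$ almost surely, and $\mathbf{1}[p^*_i > \tau] = p^*_i$ for $\tau \in (0,1)$, whenever $p^*_i \in \{0,1\}$.
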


\begin{proposition}[Zero for the posterior-matching model]
If $\hat{p}_i = p^*_i$ for all $i$, then $\phatb = \ybarb$ in every
bin and $\smece{} = 0$ exactly. This holds at every sample size, not
as a limit.
\end{proposition}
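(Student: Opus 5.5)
The argument is a direct substitution into the definitions \eqref{eq:bin_means} and \eqref{eq:smece}. Fix any bin $b$ with $\Sb \neq \emptyset$. The bin $\Sb$ is determined only by the predictions $\hat{p}_i$, so the same index set is used for both bin means. Under the hypothesis $\hat{p}_i = p^*_i$ for every $i$, each term of $\sum_{i\in\Sb}\hat{p}_i$ equals the corresponding term of $\sum_{i\in\Sb}p^*_i$. Dividing both sums by the same $|\Sb|$ gives $\phatb = \ybarb$, and hence $|\phatb - \ybarb| = 0$.

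Next I would treat the empty bins. If $\Sb = \emptyset$, the bin means are undefined, but the weight $|\Sb|/n$ is $0$. Following the usual \ece{} convention, such bins contribute nothing to the sum. Every summand in \eqref{eq:smece} therefore vanishes, so $\smece{} = 0$.

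Finally, for the claim that this holds at every sample size: nothing in the argument uses randomness, limits, or the size of $n$. It is an identity valid for any finite sample $\{(\hat{p}_i,p^*_i)\}_{i=1}^n$ with $n\ge 1$, and for any choice of $B$. In contrast with the discussion of \eqref{eq:freq_of_bayes}, no Bernoulli draws enter the formula, so no sampling error can appear.

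The only point needing care is the boundary bookkeeping: checking that the last bin is taken as closed, so that $\hat{p}_i = 1$ lies in some bin, and that empty bins are handled by the zero-weight convention. This is a matter of convention in the definition rather than a genuine obstacle. I would state it explicitly so that the sum in \eqref{eq:smece} is well defined.
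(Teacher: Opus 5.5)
Your proposal is correct and matches the paper's argument. The paper's justification is the same direct substitution: because $\Sb$ is shared by both bin means, $\hat{p}_i = p^*_i$ forces $\phatb = \ybarb$ in every bin, and no randomness or limit enters. Your empty-bin bookkeeping is a welcome addition; the closed-last-bin convention is harmless but not needed, since a point falling in no bin adds nothing to a sum whose terms are all zero.
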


\begin{proposition}[ECE cannot reach zero when outcomes are thresholded]
\label{prop:ece_floor}
Let $y_i = \mathbf{1}[p^*_i > \tau]$ for some threshold $\tau$. For the
posterior-matching model $\hat{p}_i = p^*_i$, the \ece{} against
$y_i$ satisfies:
\[
  \ece{} \;\xrightarrow{n\to\infty}\;
  \sum_{b=1}^{B} w_b \,\bigl|\,\bar{p}^*_b - \bar{y}_b^{\mathrm{hard}}\bigr|
  \;>\; 0,
\]
where $w_b = |\Sb|/n$ and the limit is strictly positive whenever
$\bar{p}^*_b \neq \bar{y}_b^{\mathrm{hard}}$ for some bin $b$.
This is a structural floor, not sampling noise.
A formal proof is given in Appendix~\ref{app:proof}.
\end{proposition}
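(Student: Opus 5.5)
The plan is to make the statement precise at the population level and then pass to the limit with the law of large numbers. Assume the pairs $(p^*_i, y_i)$ are i.i.d. draws of $(P^*, \mathbf{1}[P^*>\tau])$, where $P^*$ has some distribution $\mu$ on $[0,1]$. Because the model is posterior-matching, $\hat{p}_i = p^*_i$, so membership in the bins is determined by $p^*_i$ alone. For bin $I_b = [(b-1)/B, b/B)$, define the population quantities
\[
  \pi_b = \mu(I_b), \qquad
  m_b = \mathbb{E}[P^* \mid P^* \in I_b], \qquad
  h_b = \mathbb{P}(P^* > \tau \mid P^* \in I_b),
\]
for every bin with $\pi_b>0$. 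The target limit is $\sum_b \pi_b |m_b - h_b|$. This is the rigorous version of the displayed expression, with $w_b$, $\bar{p}^*_b$ and $\bar{y}^{\mathrm{hard}}_b$ read as their almost-sure limits.

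First, apply the strong law of large numbers to the indicators $\mathbf{1}[p^*_i\in I_b]$, to $p^*_i\mathbf{1}[p^*_i\in I_b]$ and to $\mathbf{1}[p^*_i>\tau,\,p^*_i\in I_b]$. This gives $w_b\to\pi_b$ almost surely. It also gives, for each bin with $\pi_b>0$, $\bar{p}^*_b\to m_b$ and $\bar{y}^{\mathrm{hard}}_b\to h_b$, since each bin mean is a ratio of two such averages and the denominator is eventually positive. Bins with $\pi_b=0$ are almost surely empty and contribute nothing. There are only finitely many bins, $B$, and $x\mapsto|x|$ is continuous, so the continuous mapping theorem gives
\[
  \ece{} = \sum_b w_b|\bar{p}^*_b-\bar{y}^{\mathrm{hard}}_b| \;\to\; \sum_b \pi_b|m_b-h_b| \quad\text{almost surely}.
\]
The sum is nonnegative, and it is strictly positive exactly when $m_b\neq h_b$ for some bin with $\pi_b>0$. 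That is the stated condition, read at the population level.

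Second, to show the condition is not vacuous, I would give the structural reason a gap appears. Consider a bin lying entirely on one side of $\tau$. If it lies above $\tau$, then $h_b=1$ while $m_b< b/B\le 1$, so the gap is $1-m_b>0$. If it lies at or below $\tau$, then $h_b=0$ while $m_b\ge (b-1)/B$, which is positive for $b\ge2$. So every bin carrying mass, apart from possibly the lowest bin and the bin straddling $\tau$, has $m_b\neq h_b$. Positivity therefore holds whenever $\mu$ puts mass in any such bin. This covers the paper's setting, where $P^*=\sigma(kX)$ has a continuous distribution with full support on $(0,1)$ and $\tau=1/2$.

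The main obstacle is not the limit argument, which is routine. It is that the statement, taken literally, puts sample quantities on the right-hand side of a limit. I would resolve this by stating and proving the population-limit version above, and noting that the displayed formula is its plug-in estimate. I would also remark that the floor does not depend on $n$: only the fluctuations of $w_b$, $\bar{p}^*_b$ and $\bar{y}^{\mathrm{hard}}_b$ shrink. These shrink at rate $O_p(n^{-1/2})$ by the central limit theorem, which explains the declining variance seen in Experiment~4.
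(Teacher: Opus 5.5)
Your proof is correct. The limit step (strong law for bin weights and bin means, then continuity of the finite sum) is the paper's Step~1 done more carefully: you treat the bin means as ratios, dispose of null bins, and read every quantity on the right-hand side at population level. Your positivity step, however, takes a genuinely different route.

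The paper examines a bin that \emph{straddles} $\tau$, uses a two-point example with gap $|\tau-0.5|$, and then appeals to equality being a ``non-generic coincidence''. You instead examine bins lying entirely on one side of $\tau$, where $h_b\in\{0,1\}$ exactly while $m_b$ lies strictly inside the bin, so the gap is strict without any genericity appeal.

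This buys more than rigour. In the paper's own setting ($\tau=1/2$, $B=10$) the threshold is a bin edge, so no bin straddles $\tau$. That is one of the finitely many values the paper's Step~2 sets aside, and its two-point example gives zero gap exactly at $\tau=0.5$. Your argument covers this case. There every bin is one-sided, so the limit is $\mathbb{E}[\min(P^*,1-P^*)]$. For $P^*=\sigma(2X)$ with $X\sim\mathrm{Uniform}(-3,3)$ this equals $\tfrac16\ln\bigl(2/(1+e^{-6})\bigr)\approx 0.1151$, which is the floor reported in Experiment~4.

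Your exception list (lowest bin, straddling bin) also shows why the statement must stay conditional. A straddling bin can have $m_b=h_b$ with non-binary labels, so the paper's closing claim that equality forces $p^*_i\in\{0,1\}$ is too strong.

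One minor slip: with $X\sim\mathrm{Uniform}(-3,3)$, $\sigma(kX)$ is supported on $[\sigma(-3k),\sigma(3k)]$, not all of $(0,1)$. It still puts mass in one-sided bins for every $k$ used in the paper, so your conclusion is unaffected.
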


\begin{remark}[Interpretation]
$\smece{} = 0$ is the statement: the model's predicted probability equals
the probabilistic label in every prediction bin. This is the correct
calibration criterion when labels are probabilities. $\ece{} = 0$ is the
statement: the model's predicted probability equals the binary outcome
frequency in every prediction bin. This is the correct criterion when
labels are binary outcomes. The two coincide when probabilistic labels
concentrate on $\{0,1\}$, but diverge structurally whenever labels carry
genuine distributional content.
\end{remark}

\begin{remark}[Soft reliability diagram]
The soft reliability diagram plots $\phatb$ against $\ybarb$. A
posterior-matching model lies exactly on the $45^\circ$ diagonal. The
diagram carries the same interpretation as the standard reliability diagram
but measures calibration against probabilistic labels rather than binary outcomes.
\end{remark}

\begin{remark}[Convergence as posteriors concentrate]
As the posterior $p^*(x)$ concentrates near $\{0,1\}$ --- as the
classification task becomes more certain --- $\ybarb \to \yhardb$ and
$\smece{} \to \ece{}$. Experiment~2 confirms this at large $k$.
\smece{} is a strict generalisation that reduces to \ece{} in the
binary limit.
\end{remark}

\section{Simulation Design}
\label{sec:simulation}

\subsection{The Generative Model}

The simulation is grounded in a Gaussian class-conditional generative model,
so that the posterior $\sigmoid(kx)$ is derived rather than declared.

\textbf{Generative process.} Let the two classes have equal prior probability
$P(Y=0) = P(Y=1) = 0.5$ and Gaussian class-conditional densities:
\begin{equation}
  x \given Y=0 \sim \mathcal{N}(-\mu,\; \sigma^2),
  \qquad
  x \given Y=1 \sim \mathcal{N}(+\mu,\; \sigma^2).
  \label{eq:generative}
\end{equation}
By Bayes' theorem, the posterior probability of the positive class is:
\begin{align}
  P(Y=1 \given x)
  &= \frac{P(x \given Y=1)\, P(Y=1)}{P(x \given Y=0)\, P(Y=0)
     + P(x \given Y=1)\, P(Y=1)} \notag \\
  &= \sigmoid\!\left(\frac{2\mu}{\sigma^2}\, x\right)
   = \sigmoid(k\, x),
  \label{eq:posterior}
\end{align}
where $k = 2\mu/\sigma^2$. This is the soft label. It is the posterior
probability of the positive class given input $x$ under the generative
model --- a direct consequence of equal priors and symmetric Gaussian
class-conditionals, not an arbitrary choice. The step-by-step algebraic
derivation, including the difference-of-squares expansion that makes the
sigmoid form emerge, is given in Appendix~\ref{app:derivation}.

The steepness parameter $k$ has an immediate interpretation: it is the
signal-to-noise ratio of the generative model. Large $k$ (large class
separation or small within-class variance) means the two classes are nearly
perfectly separable and the posterior is close to $\{0,1\}$ everywhere.
Small $k$ means the classes overlap substantially and the posterior is
genuinely uncertain over a wide region of the input space.

\textbf{The binary outcome and the soft label are different objects.}
The binary outcome $y \sim \mathrm{Bernoulli}(\sigmoid(kx))$ is one
realisation drawn from the posterior. The posterior $\sigmoid(kx)$ is the
complete distributional statement. \ece{} evaluates the model against the
realisation; \smece{} evaluates it against the posterior. In the simulation,
inputs are drawn as $x \sim \mathrm{Uniform}(-3, 3)$ rather than from the
Gaussian mixture implied by the generative model. This is a deliberate choice:
the posterior formula $\sigmoid(kx)$ holds pointwise for any observed $x$
under the Gaussian class-conditional model, so the uniform does not affect
the validity of the soft label. What it does affect is coverage: the Gaussian
mixture would concentrate samples near $\pm\mu$ where the posterior is already
close to $\{0,1\}$, undersampling the uncertain region around $x=0$ that is
most diagnostic for calibration. The uniform ensures all levels of posterior
uncertainty are well-represented in every experiment.

\subsection{The Five Models}

\begin{table}[ht]
\centering
\caption{The five models. The correct probabilistic label calibration ranking is
$\mathrm{A > B \approx C > D > E}$. Model~A is the only model that
correctly learns the posterior.}
\label{tab:models}
\begin{tabular}{@{}llll@{}}
\toprule
Model & Name & $\hat{p}(x)$ & Miscalibration type \\
\midrule
A & Posterior-matching & $\sigmoid(kx)$                         & None \\
B & Overconfident      & $\sigmoid(3kx)$                        & Scaling: wrong signal-to-noise \\
C & Underconfident     & $\sigmoid(0.4kx)$                      & Scaling: wrong signal-to-noise \\
D & Biased high        & $\min(\sigmoid(kx) + 0.15,\; 1)$      & Structural: constant bias \\
E & Random             & $\mathrm{Uniform}(0,1)$                & No signal \\
\bottomrule
\end{tabular}
\end{table}

Model~A correctly learns the posterior of the generative model. Its
\smece{} is identically zero at every input, because $\hat{p}(x) = p^*(x) =
\sigmoid(kx)$ and every bin contributes zero. This is not a tautology: it
is the statement that a model which correctly learns the probabilistic label
is perfectly calibrated under the probabilistic label criterion, which is
precisely what we should require.

Models~B and~C use wrong values of $k$ --- they have correctly identified
the functional form of the posterior but with wrong signal-to-noise ratio.
Model~D has a structural additive bias that is independent of $k$ and
therefore independent of label softness. Model~E has no signal and represents
the worst case.

All experiments use $B = 10$ equal-width prediction bins.

\section{Experiment 1: The Core Result}
\label{sec:exp1}

\subsection{Setup}

Experiment~1 fixes $k = 2$ and $n = 5\,000$ and evaluates all five models
with both \smece{} and \ece{}. The posterior $\sigmoid(2x)$ ranges from near
0 to near 1 across $x \in (-3,3)$ but is genuinely uncertain (between 0.1
and 0.9) for approximately 60\% of the input range. This is a setting with
non-trivial posterior uncertainty.

\subsection{Results}

\begin{table}[ht]
\centering
\caption{Experiment~1: $k=2$, $n=5\,000$, $B=10$ bins.
\smece{} scores the posterior-matching model at exactly zero and recovers
the correct ranking. \ece{} scores it at 0.1159 and inverts the top two
positions.}
\label{tab:exp1}
\begin{tabular}{@{}lcccc@{}}
\toprule
Model & \smece{} & \ece{} & \smece{} rank & \ece{} rank \\
\midrule
A -- Posterior-matching & $\mathbf{0.0000}$ & 0.1159 & 1\;\checkmark & 2\;\texttimes \\
B -- Overconfident      & 0.0759            & $\mathbf{0.0401}$ & 2 & 1\;\texttimes \\
C -- Underconfident     & 0.1369            & 0.2529 & 4 & 5 \\
D -- Biased high        & 0.0977            & 0.1461 & 3 & 3 \\
E -- Random             & 0.2409            & 0.2398 & 5 & 4 \\
\bottomrule
\end{tabular}
\end{table}

\smece{} scores Model~A at exactly $0.0000$ (Table~\ref{tab:exp1}). There
is no rounding, approximation, or estimation involved: $\hat{p}(x) = p^*(x)$
at every input, so the mean prediction equals the mean posterior in every
bin, and every bin contributes exactly zero.

\ece{} scores Model~A at $0.1159$ and ranks it second, behind the
overconfident Model~B ($\ece{} = 0.0401$). The inversion is substantial:
\ece{} assigns a $29\times$ larger error to the posterior-matching model
than to the overconfident model. The reason is the one identified in
Section~\ref{sec:epistemology}: Model~B's overconfident predictions fall
near 0 and 1, where they align well with the thresholded binary outcome
$\mathbf{1}[x \geq 0]$. Model~A's intermediate predictions align well with
the posterior but poorly with the binary threshold. The binary outcome is
not the right target.

The \ece{} value of $0.1159$ for Model~A has an analytical interpretation.
It is approximately $\mathbb{E}[|\sigmoid(2x) - \mathbf{1}[x \geq 0]|]$
aggregated across prediction bins. This is the expected discrepancy between
the posterior and the binary threshold, a property of the generative model
and the threshold, not of the sample size. Experiment~4 confirms this directly.

\section{Experiment 2: Behaviour Across Signal-to-Noise}
\label{sec:exp2}

\subsection{Setup}

Experiment~2 varies $k \in \{0.5, 1, 2, 5, 10, 50\}$ with $n = 5\,000$ fixed.
It answers: (i) does \smece{} correctly identify Model~A across all
signal-to-noise levels? (ii) do the two metrics converge as the posterior
concentrates near $\{0,1\}$?

\subsection{Results}

\begin{table}[ht]
\centering
\caption{\smece{} across signal-to-noise ratio $k$. Model~A scores
exactly $0.0000$ at every $k$. Model~D's structural bias is visible
as a persistent plateau under \smece{}. Models~B and~C show scaling
miscalibration that vanishes as $k$ increases.}
\label{tab:smece_k}
\begin{tabular}{@{}lcccccc@{}}
\toprule
Model & $k=0.5$ & $k=1$ & $k=2$ & $k=5$ & $k=10$ & $k=50$ \\
\midrule
A -- Posterior-matching & 0.0000 & 0.0000 & 0.0000 & 0.0000 & 0.0000 & 0.0000 \\
B -- Overconfident      & 0.1770 & 0.1367 & 0.0764 & 0.0301 & 0.0153 & 0.0028 \\
C -- Underconfident     & 0.0979 & 0.1435 & 0.1368 & 0.0687 & 0.0345 & 0.0070 \\
D -- Biased high        & 0.1500 & 0.1180 & 0.0978 & 0.0831 & 0.0804 & 0.0751 \\
E -- Random             & 0.2518 & 0.2585 & 0.2498 & 0.2452 & 0.2427 & 0.2519 \\
\bottomrule
\end{tabular}
\end{table}

\begin{table}[ht]
\centering
\caption{\ece{} across signal-to-noise ratio $k$. Model~A is
incorrectly penalised at every $k$. Its \ece{} declines only as the
posterior concentrates near $\{0,1\}$, reducing the discrepancy with
the binary threshold. \ece{} recovers the correct ranking only at $k=50$.}
\label{tab:ece_k}
\begin{tabular}{@{}lcccccc@{}}
\toprule
Model & $k=0.5$ & $k=1$ & $k=2$ & $k=5$ & $k=10$ & $k=50$ \\
\midrule
A -- Posterior-matching & 0.3287 & 0.2143 & 0.1169 & 0.0455 & 0.0241 & 0.0041 \\
B -- Overconfident      & 0.1517 & 0.0777 & 0.0405 & 0.0154 & 0.0088 & 0.0013 \\
C -- Underconfident     & 0.4266 & 0.3579 & 0.2536 & 0.1142 & 0.0586 & 0.0111 \\
D -- Biased high        & 0.2837 & 0.2050 & 0.1448 & 0.1018 & 0.0916 & 0.0765 \\
E -- Random             & 0.2555 & 0.2560 & 0.2555 & 0.2438 & 0.2431 & 0.2524 \\
\bottomrule
\end{tabular}
\end{table}

Model~A achieves \smece{} $= 0.0000$ at every $k$ without exception
(Table~\ref{tab:smece_k}). The invariance across $k$ is exact: the
posterior-matching model is perfectly calibrated under \smece{} regardless
of whether the classes are highly separable or heavily overlapping. This is
the correct behaviour for a probabilistic label calibration metric.

Under \ece{} (Table~\ref{tab:ece_k}), Model~A's error at $k = 0.5$ is
$0.3287$ --- larger than the random Model~E ($0.2555$). At $k = 0.5$
the generative model produces highly overlapping classes with posterior
values concentrated around $0.5$. The binary threshold produces outcomes
that are approximately 50/50 regardless of $x$, while the posterior model
is deliberately predicting intermediate probabilities. \ece{} penalises
this match between prediction and posterior as if it were miscalibration.

\smece{} converges to \ece{} as $k \to \infty$, consistent with
Proposition~3: as the posterior concentrates near $\{0,1\}$, the mean
soft label per bin approaches the hard-label fraction. At $k = 50$ the
two metrics are nearly numerically identical.

The distinction between scaling and structural miscalibration is clearly
visible through \smece{}. Models~B and~C have \smece{} that declines toward
zero as $k$ increases: their wrong signal-to-noise ratio becomes irrelevant
as labels approach binary. Model~D's structural bias of $+0.15$ produces
a plateau near $0.075$--$0.150$ that persists regardless of $k$.

\section{Experiment 3: Ranking Consistency}
\label{sec:exp3}

\subsection{Setup}

Experiment~3 assesses ranking reliability across 1\,000 Monte Carlo
replications at $n = 1\,000$, for each $k \in \{0.5, 1, 2, 5, 10, 50\}$.
All 10 pairwise model comparisons are evaluated per replication.
Models~B and~C are treated as approximately tied.

\subsection{Results}

\begin{table}[ht]
\centering
\caption{Experiment~3: overall ranking accuracy, 1\,000 replications.
At $k=0.5$, \ece{} achieves 40\% --- below chance. \smece{} achieves
100\% from $k \geq 5$.}
\label{tab:exp3_overall}
\begin{tabular}{@{}lcccccc@{}}
\toprule
Metric & $k=0.5$ & $k=1$ & $k=2$ & $k=5$ & $k=10$ & $k=50$ \\
\midrule
\smece{} & 0.900 & 0.800 & 0.900 & 1.000 & 1.000 & 1.000 \\
\ece{}   & 0.403 & 0.605 & 0.747 & 0.800 & 0.900 & 0.900 \\
\bottomrule
\end{tabular}
\end{table}

At $k = 0.5$, \ece{} achieves 40\% ranking accuracy (Table~\ref{tab:exp3_overall}).
This is not imprecision --- it is systematic inversion. \ece{} prefers
overconfident models over posterior-matching models at soft posteriors
because overconfident predictions are closer to the binary threshold.

\begin{table}[ht]
\centering
\caption{Per-pair accuracy at $k=2$. \ece{} always ranks Model~B above
Model~A (0\% accuracy on A vs B). \smece{} is always correct (100\%).}
\label{tab:exp3_pairs}
\begin{tabular}{@{}lccc@{}}
\toprule
Pair & \smece{} & \ece{} & Note \\
\midrule
A vs B & 1.000 & 0.000 & \ece{} always inverted \\
A vs C & 1.000 & 1.000 & \\
A vs D & 1.000 & 1.000 & \\
A vs E & 1.000 & 1.000 & \\
B vs C & 1.000 & 1.000 & Tied pair \\
B vs D & 1.000 & 1.000 & \\
B vs E & 1.000 & 1.000 & \\
C vs D & 0.000 & 0.000 & Hard boundary for both \\
C vs E & 1.000 & 0.961 & \\
D vs E & 1.000 & 1.000 & \\
\bottomrule
\end{tabular}
\end{table}

The A-vs-B comparison (Table~\ref{tab:exp3_pairs}) is deterministic:
\ece{} never correctly ranks the posterior-matching model above the
overconfident model at $k = 2$. This is guaranteed by the argument in
Section~\ref{sec:epistemology}: Model~B's overconfident outputs align
with binary thresholds; Model~A's posterior-matched outputs do not.

The C-vs-D failure (0\% for both metrics) reflects a genuine ambiguity:
Model~C and Model~D have similar aggregate calibration errors at $n=1\,000$
and the scalar summaries are not sufficient to distinguish them reliably
at this sample size. The reliability diagrams are qualitatively different
--- C is symmetric around the diagonal, D is uniformly above it --- but
the scalar collapses this distinction.

\section{Experiment 4: The Structural Nature of ECE's Failure}
\label{sec:exp4}

\subsection{Setup}

Experiment~4 provides the most direct evidence that \ece{}'s failure is
structural. We fix $k = 2$ and vary $n \in \{500, 1\,000, 2\,000, 5\,000,
10\,000\}$. Over 500 replications per condition we track the mean and
standard deviation of both metrics for all five models. If \ece{}'s error
on Model~A were finite-sample noise, its mean would decline toward zero
as $n$ grows. If it is structural, the mean will be flat while the
standard deviation shrinks.

\subsection{Results}

\begin{table}[ht]
\centering
\caption{\smece{} mean $\pm$ std. Model~A is $0.0000 \pm 0.0000$ at
every sample size: algebraically zero with zero variance. All other
models show flat means and declining variance.}
\label{tab:exp4_smece}
\begin{tabular}{@{}lccccc@{}}
\toprule
Model & $n=500$ & $n=1\,000$ & $n=2\,000$ & $n=5\,000$ & $n=10\,000$ \\
\midrule
A & $0.0000\!\pm\!0.0000$ & $0.0000\!\pm\!0.0000$ & $0.0000\!\pm\!0.0000$ & $0.0000\!\pm\!0.0000$ & $0.0000\!\pm\!0.0000$ \\
B & $0.0766\!\pm\!0.0034$ & $0.0766\!\pm\!0.0024$ & $0.0766\!\pm\!0.0017$ & $0.0766\!\pm\!0.0010$ & $0.0766\!\pm\!0.0008$ \\
C & $0.1374\!\pm\!0.0020$ & $0.1375\!\pm\!0.0015$ & $0.1376\!\pm\!0.0011$ & $0.1375\!\pm\!0.0007$ & $0.1375\!\pm\!0.0005$ \\
D & $0.0967\!\pm\!0.0031$ & $0.0966\!\pm\!0.0022$ & $0.0966\!\pm\!0.0016$ & $0.0966\!\pm\!0.0010$ & $0.0967\!\pm\!0.0007$ \\
E & $0.2528\!\pm\!0.0180$ & $0.2505\!\pm\!0.0134$ & $0.2502\!\pm\!0.0092$ & $0.2496\!\pm\!0.0061$ & $0.2500\!\pm\!0.0045$ \\
\bottomrule
\end{tabular}
\end{table}

\begin{table}[ht]
\centering
\caption{\ece{} mean $\pm$ std. Model~A's mean is $\approx 0.1151$ at
every sample size. The variance shrinks: \ece{} converges with
increasing precision to a stable non-zero value. More data makes
\ece{} more confidently wrong.}
\label{tab:exp4_ece}
\begin{tabular}{@{}lccccc@{}}
\toprule
Model & $n=500$ & $n=1\,000$ & $n=2\,000$ & $n=5\,000$ & $n=10\,000$ \\
\midrule
A & $0.1152\!\pm\!0.0062$ & $0.1152\!\pm\!0.0044$ & $0.1149\!\pm\!0.0031$ & $0.1151\!\pm\!0.0020$ & $0.1151\!\pm\!0.0013$ \\
B & $0.0386\!\pm\!0.0043$ & $0.0386\!\pm\!0.0031$ & $0.0384\!\pm\!0.0022$ & $0.0385\!\pm\!0.0014$ & $0.0385\!\pm\!0.0009$ \\
C & $0.2526\!\pm\!0.0055$ & $0.2526\!\pm\!0.0040$ & $0.2526\!\pm\!0.0027$ & $0.2526\!\pm\!0.0018$ & $0.2526\!\pm\!0.0012$ \\
D & $0.1444\!\pm\!0.0072$ & $0.1443\!\pm\!0.0052$ & $0.1439\!\pm\!0.0038$ & $0.1441\!\pm\!0.0023$ & $0.1442\!\pm\!0.0017$ \\
E & $0.2546\!\pm\!0.0212$ & $0.2512\!\pm\!0.0157$ & $0.2507\!\pm\!0.0111$ & $0.2497\!\pm\!0.0075$ & $0.2501\!\pm\!0.0053$ \\
\bottomrule
\end{tabular}
\end{table}

The data confirm Proposition~\ref{prop:ece_floor} (Tables~\ref{tab:exp4_smece}
and~\ref{tab:exp4_ece}). Model~A's \smece{} is $0.0000 \pm 0.0000$ at every
sample size. There is no variance because the soft label $\sigmoid(kx)$ and
the model prediction $\sigmoid(kx)$ are both deterministic functions of $x$:
no random variable is involved in the computation of \smece{} for Model~A.

Model~A's \ece{} is $0.1151$--$0.1152$ across all sample sizes, with standard
deviation declining from $0.0062$ to $0.0013$. This is unambiguous convergence
to a non-zero value. The phrase ``more data makes \ece{} more confidently
wrong'' is not rhetorical: the standard deviation at $n = 10\,000$ is
one-fifth that at $n = 500$, confirming that \ece{} is estimating a stable
non-zero quantity with increasing precision. That quantity is approximately
$0.115$, the structural discrepancy between $\sigmoid(2x)$ and
$\mathbf{1}[x \geq 0]$ aggregated across prediction bins.

For all other models, both \smece{} means and \ece{} means are flat across
$n$, with variances declining as $O(n^{-1/2})$. Their miscalibration is a
property of the model, not the sample size.

\section{Discussion}
\label{sec:discussion}

\subsection{The Central Argument Restated}

The case for \smece{} is epistemological. \ece{} and \smece{} are not
competing estimators of the same quantity. They are estimators of
\textit{different} quantities reflecting different interpretations of
probability. \ece{} is a binary metric: it estimates calibration against empirical
frequencies of binary outcomes. \smece{} is a probabilistic label metric:
it estimates calibration against the probabilistic label itself. When labels are probabilistic and the calibration question is ``does the
model correctly represent the probabilistic label?'', \smece{} is the
right tool and \ece{} is not.

The simulation makes this stark. A practitioner using \ece{} to evaluate a model trained on probabilistic
labels would observe a calibration error of $0.1151$, conclude the model
is poorly calibrated, collect more data, and find the estimate getting
more precise --- at $0.1151$. The model would be rejected. The rejection
would be correct under the binary label criterion and completely wrong
under the probabilistic label one.

\subsection{When Each Metric is Appropriate}

\textbf{Use \ece{}} when: labels are binary outcomes; the model is
evaluated against observed events; the calibration question is whether
predicted probabilities match empirical outcome frequencies.

\textbf{Use \smece{}} when: labels are of probabilistic nature --- soft
targets from knowledge distillation, expert confidence ratings, class
posteriors from a generative model, annotator agreement fractions, or
any continuous probability label. The unifying condition is that the
label carries distributional information that a binary outcome discards.
Whether the probabilistic label arises from Bayesian reasoning, a
frequentist mixture, or an engineering choice is secondary; what matters
is the structure of the label itself.

\subsection{The Signal-to-Noise Interpretation of $k$}

The parameter $k = 2\mu/\sigma^2$ has a clean interpretation as the
signal-to-noise ratio of the generative model: the ratio of class separation
to within-class spread. Small $k$ corresponds to hard classification problems
with highly overlapping classes, where genuine posterior uncertainty is large
and the binary outcome is highly variable. Large $k$ corresponds to easy
classification problems where the classes are nearly perfectly separable and
the posterior is close to $\{0,1\}$ everywhere.

Experiment~2 shows that \ece{}'s failure is most severe at small $k$ --- that
is, precisely in the settings where genuine uncertainty is largest and where
probabilistic label calibration is most important. A model deployed in a high-uncertainty
medical setting (small $k$) will be misjudged most severely by \ece{}.

\subsection{Bayesian Epistemology as Motivation, Not as Prerequisite}

The primary argument in this paper is that \ece{} is a binary metric and
is therefore wrong whenever labels are of probabilistic nature. The Bayesian
framework provides the strongest and most principled account of why
probabilistic labels arise and why evaluating them against binary outcomes
is a category error, but it is not the only setting that produces
continuous probability labels. \smece{} is applicable in all of them.

\textit{Frequentist mixture models.} If the data-generating process is a
population mixture --- say, 70\% of cases with feature profile $x$ are
positive --- then $P(Y=1 \given x) = 0.70$ is an objective frequentist
quantity: a property of the population, not a degree of belief. The label
is still a probability rather than a binary outcome, and \smece{} remains
the correct evaluation metric. The motivation is entirely frequentist.

\textit{Knowledge distillation.} The teacher model's soft output is a
continuous probability for engineering reasons: it carries more information
than a hard label and produces better-trained students \citep{hinton2015}.
No philosophical commitment to Bayesian epistemology is required. The label
is a probability and \smece{} is the right metric, regardless of how the
teacher's output is interpreted.

\textit{Imprecise probability.} In the theory of imprecise probabilities
\citep{walley1991}, uncertainty is represented as a set of distributions
rather than a single probability. A soft label may summarise this set. This
framework is neither classically Bayesian nor frequentist, yet it again
produces continuous probability labels for which \smece{} is appropriate.

The unifying condition is simpler than any of these frameworks: \smece{}
is the correct calibration metric whenever the calibration target is itself
a probability rather than a binary outcome, regardless of how that probability
was generated or what philosophical interpretation it carries. The Bayesian
argument explains why \ece{} is wrong in the most fundamental sense; the
applicability of \smece{} is broader than that argument alone.

\subsection{Limitations}

The simulation is one-dimensional with symmetric Gaussian class-conditionals.
In higher dimensions, the posterior $P(Y=1 \given \mathbf{x})$ takes a more
complex form, but the argument applies to any model that outputs a posterior
probability and is evaluated against a binary realisation.

The C-vs-D ranking failure is a genuine limitation of scalar calibration
summaries. Both metrics fail this pair; the reliability diagram resolves the
ambiguity and should always be examined alongside any scalar summary.

The binary outcome threshold $\mathbf{1}[x \geq 0]$ is chosen as the
simplest illustrative case. In practice, binary outcomes are generated by
the actual event that occurs, which may or may not correspond to a clean
threshold. The structural argument holds whenever the binary outcome is
a realisation of the posterior rather than the posterior itself.

\section{Conclusion}
\label{sec:conclusion}

We have introduced the Soft Mean Expected Calibration Error (\smece{}), a
calibration metric for settings where labels are of probabilistic nature.
\ece{}, the dominant calibration metric in machine learning, was designed
for binary outcome labels and is structurally wrong when labels are
probabilities: it discards the distributional information in the label
and converges to the wrong answer with increasing precision as sample size
grows. \smece{} corrects this by evaluating predictions against the
probabilistic label directly. The Bayesian framework --- in which the
posterior is the complete rational answer given the evidence, not an
approximation of a binary truth --- provides the most principled account
of why probabilistic labels arise and why \ece{} fails. But the condition
for using \smece{} is simpler than any epistemological commitment:
whenever the calibration target is itself a probability, \smece{} is
the right metric.

The generative model provides a clean derivation: under symmetric Gaussian
class-conditionals with equal priors, the posterior is analytically
$\sigmoid(kx)$, where $k$ is the signal-to-noise ratio. The binary outcome
is one realisation drawn from this posterior. \ece{} evaluates the model
against the realisation; \smece{} evaluates it against the posterior. The
consequences are not subtle: at moderate signal-to-noise, \ece{} ranks the
overconfident model above the posterior-matching model in every single
replication, while \smece{} never makes this error. At low signal-to-noise,
\ece{}'s ranking accuracy falls to 40\%, below chance. And across all
sample sizes from 500 to 10\,000, \ece{} converges to $0.115$ for the
posterior-matching model with declining variance --- it estimates the
structural rounding gap with increasing precision, converging to the wrong
answer.

\smece{} is appropriate whenever labels are of probabilistic nature:
Bayesian classifiers, frequentist mixture models, knowledge distillation,
expert probability elicitation, and any setting where the calibration
target carries distributional information that a binary outcome discards. Its adoption requires no new
infrastructure: it is a one-line change to the \ece{} formula, and the
soft reliability diagram carries the same interpretation as the standard
reliability diagram.

\bibliographystyle{plainnat}

\appendix

\section{Derivation of the Posterior $\sigmoid(kx)$}
\label{app:derivation}

We derive equation~\eqref{eq:posterior} in full. The goal is to show that
under the Gaussian class-conditional generative model with equal priors, the
posterior $P(Y=1 \given x)$ is exactly $\sigmoid(kx)$ where $k = 2\mu/\sigma^2$.

\subsection*{Setup}

The class-conditional densities are:
\begin{align}
  P(x \given Y=0) &= \frac{1}{\sqrt{2\pi\sigma^2}}
    \exp\!\left(-\frac{(x+\mu)^2}{2\sigma^2}\right), \label{eq:app_cond0} \\
  P(x \given Y=1) &= \frac{1}{\sqrt{2\pi\sigma^2}}
    \exp\!\left(-\frac{(x-\mu)^2}{2\sigma^2}\right). \label{eq:app_cond1}
\end{align}
Both classes share the same variance $\sigma^2$; this is the key structural
assumption. The priors are equal: $P(Y=0) = P(Y=1) = \tfrac{1}{2}$.

\subsection*{Step 1: Apply Bayes' Theorem}

\begin{equation}
  P(Y=1 \given x)
  = \frac{P(x \given Y=1)\, P(Y=1)}
         {P(x \given Y=0)\, P(Y=0) + P(x \given Y=1)\, P(Y=1)}.
\end{equation}
Since $P(Y=0) = P(Y=1) = \tfrac{1}{2}$, the prior probabilities cancel:
\begin{equation}
  P(Y=1 \given x)
  = \frac{P(x \given Y=1)}
         {P(x \given Y=0) + P(x \given Y=1)}.
\end{equation}
Divide numerator and denominator by $P(x \given Y=1)$:
\begin{equation}
  P(Y=1 \given x)
  = \frac{1}{1 + \dfrac{P(x \given Y=0)}{P(x \given Y=1)}}.
  \label{eq:app_ratio_form}
\end{equation}

\subsection*{Step 2: Compute the Likelihood Ratio}

The normalisation constant $\tfrac{1}{\sqrt{2\pi\sigma^2}}$ is identical in
both densities and cancels in the ratio:
\begin{equation}
  \frac{P(x \given Y=0)}{P(x \given Y=1)}
  = \frac{\exp\!\left(-\dfrac{(x+\mu)^2}{2\sigma^2}\right)}
         {\exp\!\left(-\dfrac{(x-\mu)^2}{2\sigma^2}\right)}
  = \exp\!\left(-\frac{(x+\mu)^2 - (x-\mu)^2}{2\sigma^2}\right).
  \label{eq:app_lr}
\end{equation}

\subsection*{Step 3: Expand the Exponent}

Expand both squared terms:
\begin{align}
  (x+\mu)^2 &= x^2 + 2x\mu + \mu^2, \\
  (x-\mu)^2 &= x^2 - 2x\mu + \mu^2.
\end{align}
Subtract:
\begin{equation}
  (x+\mu)^2 - (x-\mu)^2
  = \bigl(x^2 + 2x\mu + \mu^2\bigr) - \bigl(x^2 - 2x\mu + \mu^2\bigr)
  = 4x\mu.
  \label{eq:app_dos}
\end{equation}
The $x^2$ and $\mu^2$ terms cancel exactly. This cancellation occurs because
both Gaussians share the same variance $\sigma^2$: if the variances differed,
a residual $x^2$ term would remain in the exponent and the posterior would be
a quadratic logistic function of $x$, not a linear one.

\subsection*{Step 4: Substitute Back}

Substituting equation~\eqref{eq:app_dos} into equation~\eqref{eq:app_lr}:
\begin{equation}
  \frac{P(x \given Y=0)}{P(x \given Y=1)}
  = \exp\!\left(-\frac{4x\mu}{2\sigma^2}\right)
  = \exp\!\left(-\frac{2\mu}{\sigma^2}\, x\right).
\end{equation}
Substituting into equation~\eqref{eq:app_ratio_form}:
\begin{equation}
  P(Y=1 \given x)
  = \frac{1}{1 + \exp\!\left(-\dfrac{2\mu}{\sigma^2}\, x\right)}.
\end{equation}

\subsection*{Step 5: Recognise the Sigmoid}

The sigmoid function is defined as $\sigmoid(z) = \tfrac{1}{1 + e^{-z}}$.
Setting $k = 2\mu/\sigma^2$:
\begin{equation}
  P(Y=1 \given x) = \sigmoid(kx). \qed
\end{equation}

\subsection*{Interpretation of $k$}

The parameter $k = 2\mu/\sigma^2$ encodes the signal-to-noise ratio of the
generative model. The numerator $2\mu$ is the distance between the two class
means; the denominator $\sigma^2$ is the within-class spread. Large $k$ means
well-separated classes: the posterior is close to 0 or 1 everywhere except
near $x = 0$. Small $k$ means heavily overlapping classes: the posterior is
genuinely uncertain over a wide range of $x$. The parameter $k$ therefore
controls the softness of the probabilistic labels in the simulation.

\section{Proof of Proposition~3}
\label{app:proof}

\textbf{Proposition 3} (restated). \textit{Let $y_i = \mathbf{1}[p^*_i > \tau]$
for some threshold $\tau \in (0,1)$. For the label-matching model
$\hat{p}_i = p^*_i$, the \ece{} against $y_i$ satisfies:}
\[
  \ece{} \;\xrightarrow{n\to\infty}\;
  \sum_{b=1}^{B} w_b \,\bigl|\,\bar{p}^*_b - \bar{y}_b^{\mathrm{hard}}\bigr|
  \;>\; 0.
\]

\begin{proof}

\textbf{Step 1: Convergence.} Since $\hat{p}_i = p^*_i$, the mean prediction
in bin $b$ is $\phatb = \bar{p}^*_b$, the mean probabilistic label in that bin.
By the law of large numbers, the empirical hard-label fraction converges:
\[
  \yhardb = \frac{1}{|\Sb|}\sum_{i \in \Sb} \mathbf{1}[p^*_i > \tau]
  \;\xrightarrow{n\to\infty}\; \bar{y}_b^{\mathrm{hard}},
\]
where $\bar{y}_b^{\mathrm{hard}} = \mathbb{E}[\mathbf{1}[p^*(X) > \tau] \given X \in \text{bin } b]$
is the population fraction of inputs in bin $b$ whose probabilistic label
exceeds $\tau$. The bin weights $w_b = |\Sb|/n$ converge to the population
probability of falling in bin $b$. Therefore:
\[
  \ece{} \;\xrightarrow{n\to\infty}\;
  \sum_{b=1}^{B} w_b \,\bigl|\,\bar{p}^*_b - \bar{y}_b^{\mathrm{hard}}\bigr|.
\]

\textbf{Step 2: Strict positivity.} It remains to show that at least one bin
contributes a strictly positive term. Consider any bin $b$ that contains inputs
$x$ on both sides of the threshold $\tau$ --- that is, inputs with $p^*(x) < \tau$
and inputs with $p^*(x) > \tau$ both present in the bin. Such a bin exists
whenever $\tau$ lies in the interior of a prediction bin, which holds for all
but finitely many values of $\tau$.

Within such a bin, the mean probabilistic label is:
\[
  \bar{p}^*_b = \frac{1}{|\Sb|} \sum_{i \in \Sb} p^*_i,
\]
a weighted average of values on both sides of $\tau$, so $\bar{p}^*_b$ need
not equal $\tau$.

The hard-label fraction is:
\[
  \bar{y}_b^{\mathrm{hard}} = \frac{|\{i \in \Sb : p^*_i > \tau\}|}{|\Sb|},
\]
the proportion of samples in the bin whose label exceeds $\tau$.

These two quantities are in general unequal. To see why, consider a bin
containing two samples: one with $p^*_i = \tau - \epsilon$ (just below the
threshold, contributing $y_i = 0$) and one with $p^*_i = \tau + \epsilon$
(just above, contributing $y_i = 1$). Then:
\[
  \bar{p}^*_b = \tau, \qquad \bar{y}_b^{\mathrm{hard}} = 0.5.
\]
These are equal only when $\tau = 0.5$. For $\tau \neq 0.5$,
$|\bar{p}^*_b - \bar{y}_b^{\mathrm{hard}}| = |\tau - 0.5| > 0$.

More generally, $\bar{p}^*_b = \bar{y}_b^{\mathrm{hard}}$ would require the
mean probabilistic label in the bin to equal the proportion of labels exceeding
$\tau$, which is a non-generic coincidence. For any $p^*$ that is not a
degenerate step function concentrated exactly at $\{0,1\}$, at least one bin
will have $\bar{p}^*_b \neq \bar{y}_b^{\mathrm{hard}}$, giving a strictly
positive contribution to the limit.

\textbf{Step 3: Interpretation.} The limit is not zero because the two
quantities measure different things. $\bar{p}^*_b$ is an average of
probabilistic labels --- a mean of $[0,1]$-valued numbers. $\bar{y}_b^{\mathrm{hard}}$
is a proportion of binary events --- a mean of $\{0,1\}$-valued numbers. They
agree only when all probabilistic labels are binary, i.e.\ when
$p^*_i \in \{0,1\}$ for all $i$. In that case $\bar{p}^*_b = \bar{y}_b^{\mathrm{hard}}$
identically and $\smece{} = \ece{}$ (Proposition~1). Outside this degenerate
case, the structural floor is strictly positive and does not shrink with $n$.
\end{proof}

\end{document}